\renewcommand{\a}{\alpha}
\renewcommand{\b}{\beta}
\newcommand{\w}{\omega}
\newcommand{\eps}{\varepsilon}
\newcommand{\CALD}{\mathcal{D}}
\newcommand{\BBR}{\mathbb{R}}
\newcommand{\rpl}[2]{\textcolor{red}{#1}\textcolor{blue}{[#2]}}
\newcommand{\norm}[1]{\left\Vert#1\right\Vert}
\newcommand{\set}[2]{\{#1\colon#2\}}
\newtheorem{proposition}{Proposition}
\newenvironment{proof}[1][Proof]{\par\noindent{\bf #1.~}}
{\hspace*{\fill}\rule{5pt}{5pt}\par\vspace{0.2em}}
\newcommand{\commentout}[1]{{}}
\newcommand{\npcite}[1]{\citeauthor{#1} \citeyear{#1}}
\newcommand{\emcite}[1]{\citeauthor{#1} (\citeyear{#1})}
\begin{document}
%
\title{How Compact?: Assessing Compactness of Representations \\ through Layer-Wise Pruning}
\author{
Hyun-Joo Jung$^1$\thanks{Equally contributed.}
Jaedeok Kim$^1$\footnotemark[1]
Yoonsuck Choe$^{1,2}$\\
$^1$Machine Learning Lab, Artificial Intelligence Center, Samsung Research, Samsung Electronics Co.\\
56 Seongchon-gil, Secho-gu,
Seoul, Korea, 06765 \\
$^{2}$Department of Computer Science and Engineering, Texas A\&M University\\
College Station, TX, 77843, USA
}

\maketitle
\begin{abstract}
Various forms of representations may arise in the many layers embedded in deep neural networks (DNNs). Of these, where can we find the most compact representation?
We propose to use a pruning framework to answer this question: How compact can each layer be compressed, without losing performance?
Most of the existing DNN compression methods do not consider the relative compressibility of the individual layers.
They uniformly apply a single target sparsity to all layers or adapt layer sparsity using heuristics and additional training.
We propose a principled method that automatically determines the sparsity of individual layers derived from the importance of each layer.
To do this, we consider a metric to measure the importance of each layer based on the layer-wise capacity.
Given the trained model and the total target sparsity, we first evaluate the importance of each layer from the model.
From the evaluated importance, we compute the layer-wise sparsity of each layer.
The proposed method can be applied to any DNN architecture and can be combined with any pruning method that takes the total target sparsity as a parameter.
To validate the proposed method, we carried out an image classification task with two types of DNN architectures on two benchmark datasets and used three pruning methods for compression.
In case of VGG-16 model with weight pruning on the ImageNet dataset, we achieved up to 75\% (17.5\% on average) better top-5 accuracy than the baseline under the same total target sparsity.
Furthermore, we analyzed where the maximum compression can occur in the network. This kind of analysis can help us identify the most compact representation within a deep neural network.
\end{abstract}

\section{INTRODUCTION} 
\noindent In recent years, DNN models have been used for a variety of artificial intelligence (AI) tasks such as image classification \cite{simonyan2014very,he2016deep,huang2017densely}, semantic segmentation \cite{he2017mask}, and object detection \cite{redmon2017yolo9000}.
The need for integrating such models into devices with limited on-board computing power have been growing consistently.
However, to extend the usage of large and accurate DNN models to resource-constrained devices such as mobile phones, home appliances, or IoT devices, compressing DNN models while maintaining their performance is an imperative.

A recent study \cite{zhu2017prune} demonstrates that making large DNN models sparse by pruning can consistently outperform directly trained small-dense DNN models. 
In pruning DNN models, we would like to address the following problem: \textit{``How can we determine the target sparsity of individual layers?"}.
Most existing methods set the layer-wise sparsity to be uniformly fixed to a single target sparsity \cite{zhu2017prune} or adopt layer-wise sparsity manually \cite{han2015learning,he2017channel}. 

Starting from the assumption that not all layers in a DNN model have equal importance, we propose a new method that automatically computes the sparsity of individual layers according to layer-wise importance.

The contributions of the proposed method are as follows:

\begin{itemize}
\item The proposed method can analytically compute the layer-wise sparsity of DNN models. The computed layer-wise sparsity value enables us to prune the DNN model more efficiently than pruning with uniform sparsity for all layers. 
In our experiments, we validate that the proposed layer-wise sparsity scheme can compress DNN models more than the uniform-layer sparsity scheme while retaining the same classification accuracy.
\item The proposed method can be combined with any pruning method that takes total target sparsity as an input parameter. 
Such a condition is general in compression tasks because users usually want to control the trade-off between compression ratio and performance. 
In our experiments, we utilized three different pruning approaches (weight pruning by \npcite{han2015learning}, random channel pruning, and channel pruning by \npcite{li2016pruning}) for compression. 
Other pruning approaches are also applicable.
\item The proposed method can be applied to any DNN architecture because it does not require any constraint on the DNN architectures (e.g., \npcite{liu2017learning}; \npcite{ye2018rethinking} requires batch normalization layer to prune DNN model). 
\item To compute the layer-wise sparsity, we do not require additional training or evaluation steps which take a long time. 
\end{itemize}

\section{RELATED WORKS}
Pruning is a simple but efficient method for DNN model compression.
\emcite{zhu2017prune} showed that making large DNN models sparse by pruning can outperform small-dense DNN models trained from scratch.
There are many pruning methods according to the granularity of pruning from weight pruning \cite{han2015learning,han2015deep} to channel pruning \cite{he2017channel,liu2017learning,li2016pruning,molchanov2016pruning,ye2018rethinking}. 
These approaches mainly focused on how to select the redundant weights/filters in the model, rather than considering how many weights/filters need to be pruned in each layer. 

However, considering the role of each layer is critical for efficient compression.
\emcite{arora2018stronger} measured the noise sensitivity of each layer and tried to use it to compute the most effective number of parameters for each layer.

Recently, \emcite{he2018adc} proposed Automated Deep Compression (ADC).
They aimed to automatically find the sparsity ratio of each layer which is similar to our goal.
However, they used reinforcement learning to find the sparsity ratio and characterized the state space as a layout of layer, i.e., kernel dimension, input size, and FLOPs of a layer, etc.

In this paper, we focus on directly measuring the importance of each layer for the given task and the model. 
We used the value of the weight matrix itself rather than the layout of the layer.
We will show considering layer-wise importance in DNN compression is effective.

\section{PROBLEM FORMULATION}
Our goal is to compute the sparsity of each layer in the model given the total target sparsity of the model.
In this section we derive the relation between the total target sparsity $s$ of a model and the sparsity of each layer while considering the importance of each layer.

Let $s_l \in [0,1)$, $l=1,\cdots L$, be the layer sparsity of the $l$-th layer where $L$ is the total number of layers in the model.
Then, the layer sparsity $s_l$ should satisfy the following condition:
\begin{align}
\label{eqn:sparsity_condi}
	\sum_{l=1}^{L} {s_l}{N_l}=sN,
\end{align}
where $N_l$ is the number of parameters in the $l$-th layer and $N$ is the total number of parameters in the model.

A DNN model is usually overparameterized and it contains many redundant parameters.
Pruning aims at removing such parameters from every layer and leaving only the important ones.
So, if a large proportion of parameters in a layer is important for the given task, the layer should not be pruned too much while otherwise the layer should be pruned aggressively.
Hence, it is natural to consider the importance of a layer when we determine the layer sparsity.

We assume that the number of remaining parameters in the $l$-th layer after pruning is proportional to the importance of the $l$-th layer, $\omega_l$.
Because the number of remaining parameters for the $l$-th layer is equal to $(1-s_l)N_l$, we have,

\begin{align}
\label{eqn:remained_params}
(1-s_l)N_l = {\a}\omega_l,
\end{align}
where $\a > 0$ is a constant. 
Because we want all layers to equally share the number of pruned parameters with respect to the importance of each layer, $\a$ is set to be independent of the layer index. 

By summing both sides over all the layers, we obtain 
\begin{align}
	\label{eqn:sum_remained}
	\sum_{l=1}^{L} (1-s_l)N_l & = N-sN, \\ 
	\label{eqn:sum_importance}
	\sum_{l=1}^{L} {\a}\omega_l & = {\a}\Omega,
\end{align}
where $\Omega$ is the sum of importance of all layers.
From (Eq. \ref{eqn:sum_remained}) and (Eq. \ref{eqn:sum_importance}), we can easily compute $\a$, i.e.,
\begin{align}
\label{eqn:compute_alpha}
{\a} = \frac{(1-s)N}{\Omega}.
\end{align}

The above equation can be seen as the ratio of the total number of remained parameters and the sum of importance of all layers which is proportional to the total number of effective parameters.

When $\a=1$, the number of remaining parameters determined by $s$ is the same as the number of effective parameters. 
In that case, we can readily prune parameters in each layer.
When $\a<1$, then the number of remaining parameters becomes smaller than the number of effective parameters.
Therefore, we need to prune some effective parameters.
In this case, $\a$ acts as a distributor that equally allocates the number of pruning effective parameters to all layers.
Thanks to the $\a$, the compression is prevented from pruning a specific layer too much. 
In the case of $\a>1$, it acts similar to the case of $\a<1$ but allocates the number of pruning redundant parameters instead.

Because all factors consisting $\a$ are automatically determined by the given model, we can control the above pruning cases by controlling $s$ only, which we desired.  

By combining (Eq. \ref{eqn:remained_params}) and (Eq. \ref{eqn:compute_alpha}), we obtain the layer-wise sparsity as
\begin{align}
\label{eqn:layer_wise_sparsity}
\nonumber s_l & =  1-{\a \frac{\w_l}{N_l}} \\
& =  1-(1-s)\frac{N}{N_l}\frac{\omega_l}{\Omega}.
\end{align}

The remaining problem is: \textit{How can we compute the layer-wise importance $\omega_l$}?
We propose metric to measure the importance of each layer based on the \textit{layer-wise capacity}.
In the subsection below, we will explain the metric in detail.

\subsection{Layer-wise Capacity}
We measure the importance of a layer using the layer capacity induced by noise sensitivity \cite{arora2018stronger}.
Based on the definition \cite{arora2018stronger}, the authors proved that a matrix having low noise sensitivity has large singular values (i.e., low rank).
They identified the effective number of parameters of a DNN model by measuring the capacity of mapping operations (e.g., convolution or multiplication with a matrix) which is inverse proportional to the noise sensitivity.
The layer capacity has an advantage of directly counting the effective number of parameters in a layer.
However, they only consider a linear network having fully connected layers or convolutional layers.

Motivated by the work, we use the concept of capacity to compute layer-wise sparsity.
The layer capacity $\mu_l$ of the $l$-th layer is defined as 
\begin{align}
\label{eqn:capacity}
	\mu_l := \max_{x^l\in S_l}\frac{||W^{l}x^{l}||}{||W^{l}||_{F} ||x^{l}||}, 
\end{align}
where $W^{l}$ is a mapping (e.g., convolution filter or multiplication with a weight matrix) of the $l$-th layer and $x^{l}$ is the input of the $l$-th layer.  
$||\cdot||$ and $||\cdot||_{F}$ are $l^2$ norm and the Frobenius norm of the operator, respectively.
$S_l$ is a set of inputs of the $l$-th layer.
In other words, $\mu_l$ is the largest number that satisfies $\mu_l ||W^{l}||_{F} ||x^{l}|| = ||W^{l}x^{l}||$. 

According to the work \cite{arora2018stronger} and (Eq. \ref{eqn:capacity}), the mapping having large capacity has low rank and hence small number of effective parameters.
Therefore, we let the effective number of parameters to be inverse proportional to the layer-wise capacity. 
More specifically, the effective number of parameters $e_l$ of the $l$-th layer can be written as,
\begin{align}
\label{eqn:effective_params}
e_l \propto \frac{\b}{\mu_l^2},
\end{align}
where $\b$ is a constant.

In fact, $\b$ might be different across layers because other attributes such as the depth of layers (distance from the input layer) would affect the number of effective parameters.
In this paper, however, we set $\b$ to be constant for simplicity and focus on the layer-wise capacity. 

We assume that the layer having a large number of effective parameters (in other words, having small capacity) should not be pruned too much.
Therefore, we can set the importance of the $l$-th layer $\omega_l$ to be the number of effective parameters $e_l$, i.e, $\omega_l = e_l$.
Then, (Eq.  \ref{eqn:layer_wise_sparsity}) becomes,
\begin{align}
\label{eqn:sparsity_capacity}
\nonumber s_l & = 1-(1-s)\frac{N}{N_l}\frac{e_l}{E}\\
& = 1-(1-s)\frac{N}{N_l}\frac{M}{\mu_{l}^{2}}, 
\end{align}
where $E$ and $M$ are the sum of $e_l$ and $1/{\mu_l^2}$ for all layers, respectively.
Because the value of $s_l$ is independent of the value of $\b$, $s_l$ can be obtained without any knowledge of the value of $\b$.

\section{PRUNING WITH LAYER-WISE SPARSITY}
In this section, we explain our proposed DNN model compression process with layer-wise sparsity.
Given the total target sparsity, we first compute the layer-wise importance of each layer from the two proposed metrics (layer-wise performance gain and layer-wise capacity).
According to (Eq. \ref{eqn:sparsity_capacity}), we can compute layer-wise sparsity easily.

However, there are some issues in computing the actual layer-wise sparsity.
First, the user might want to control the minimum number of remaining parameters.
In the worst case, the required number of pruned parameters for a layer might be equal or larger than the total number of parameters in the layer. 
Second, the exact number of pruned parameters would be different from the computed sparsity from (Eq. \ref{eqn:layer_wise_sparsity}).
For example, in channel pruning, the number of pruned parameters in the $(l-1)$th layer also affects the $l$-th layer. 

To handle those problems, we re-formulate (Eq. \ref{eqn:layer_wise_sparsity}) as an optimization problem.
The objective function is defined as, 
\begin{align}
\label{eqn:optimization}
	\min_{\epsilon}  & \quad ||\epsilon||^2, \\
\nonumber 
	\textrm{such that }  
		& \quad \xi_l \leq \a (1+\varepsilon_l) \omega_l \leq N_l \textrm{ for all $l = 1, \cdots, L$},\\
\nonumber 
		& \quad \sum_{l} \a (1+\varepsilon_l) \omega_l = (1-s)N,
\end{align}
where $\xi_l$ is the minimum number of remaining parameters after pruning.  
$\epsilon = (\varepsilon_1, \cdots, \varepsilon_L)$ is the vector of layer-wise perturbations on the constant $\a$.
That is, though we set the number of pruned parameters of each layer equally proportional to the importance of the layer, we perturb the degree of pruning of each layer by adding $\varepsilon_l$ to $\a$ in unavoidable cases.

\begin{proposition}
	\label{prop:sol}
	If $\sum_{l=1}^L \xi_l \leq (1-s)N$, 
	then a solution of the optimization problem (\ref{eqn:optimization}) exists and it is unique.
\end{proposition}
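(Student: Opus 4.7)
The plan is to recast the problem as a strictly convex quadratic program over a compact convex feasible set, and then apply Weierstrass's theorem together with strict convexity. First I would change variables by letting $r_l := \a(1+\varepsilon_l)\omega_l$, which (since $\a>0$ and $\omega_l = \b/\mu_l^2 > 0$) is an affine, invertible function of $\varepsilon_l$. In the new coordinates the constraints reduce to the box $\xi_l \leq r_l \leq N_l$ together with a single affine equality $\sum_l r_l = (1-s)N$, and the objective $\|\epsilon\|^2$ pulls back to a strictly convex quadratic in $(r_1,\dots,r_L)$ whose Hessian is a positive-definite diagonal matrix.

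Next I would verify that the feasible set is nonempty, which is the only step that really uses the hypothesis. The problem implicitly assumes $\xi_l \leq N_l$, and trivially $(1-s)N \leq N = \sum_l N_l$. Combined with $\sum_l \xi_l \leq (1-s)N$, this lets me exhibit an explicit feasible point: set $r_l = \xi_l + \lambda(N_l-\xi_l)$ with $\lambda = \frac{(1-s)N - \sum_l \xi_l}{\sum_l (N_l - \xi_l)} \in [0,1]$, and handle the degenerate case $\xi_l = N_l$ for every $l$ (which forces equality in the hypothesis) separately. By construction each $r_l \in [\xi_l,N_l]$ and $\sum_l r_l = (1-s)N$, so pulling back through the change of variables produces a feasible $\epsilon$.

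With feasibility established, the feasible set is the intersection of a hyperplane with the closed bounded box $\prod_l [\xi_l,N_l]$, hence closed, bounded and convex, and therefore compact. Since $\|\epsilon\|^2$ is continuous on it, Weierstrass's theorem yields existence of a minimizer. Uniqueness follows from strict convexity: if two distinct minimizers $\epsilon^{(1)} \neq \epsilon^{(2)}$ both attained the optimum, then by convexity of the feasible set their midpoint would also be feasible, and by strict convexity of $\|\cdot\|^2$ the midpoint would have a strictly smaller objective value, contradicting optimality.

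The main obstacle, though a mild one, is demonstrating nonemptiness of the feasible set; the remaining arguments are standard convex-analytic facts. The hypothesis $\sum_l \xi_l \leq (1-s)N$ is precisely the lower-side compatibility condition between the box lower bounds and the total-budget equality, while its upper-side counterpart $(1-s)N \leq \sum_l N_l$ is automatic from $s \in [0,1)$ and the definition of $N$.
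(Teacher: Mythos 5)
Your proof is correct and takes essentially the same route as the paper: the paper likewise reduces the claim to feasibility plus strict convexity of $\|\epsilon\|^2$, and establishes feasibility by interpolating between the point where every layer sits at its lower bound $\xi_l$ (their $\epsilon_{min}$) and the point where it sits at $N_l$ (their $\epsilon_{max}$), invoking the intermediate value theorem where you instead write out the interpolation parameter $\lambda$ explicitly. Your write-up is marginally more careful on two points the paper glosses over --- you give a compactness/Weierstrass argument for existence rather than reading it off from strict convexity alone, and you flag the implicit assumption $\xi_l \leq N_l$ together with the degenerate case $\xi_l = N_l$ for all $l$ --- but these are refinements, not a different argument.
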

\begin{proof}
	Since $\norm{\epsilon}^2$ is strictly convex in $\epsilon$, the optimization problem (\ref{eqn:optimization}) has a unique solution if the problem is feasible.
    So it is enough to show that the constraint set of the problem is not empty.
    Denote the constraint set of the inequalities by
    \begin{align*}
    	\CALD := \set{\epsilon \in \BBR^L}{\xi_l \leq \a (1+\eps_l) \w_l \leq N_l \textrm{ for $l=1,\cdots,L$}}.
    \end{align*}
    Consider a continuous function $f\colon \CALD \to \BBR$ such that $f(\epsilon) := \sum_{l=1}^L \a (1 + \eps_l) w_l$.
    If we take $\epsilon_{max} = (N_1 / \a w_1 - 1,\cdots, N_L / \a w_L - 1)$, 
    then $\epsilon_{max} \in \CALD$ and
    \begin{align*}
    	f(\epsilon_{max}) = \sum_{l=1}^L N_l = N \geq (1-s)N.
    \end{align*}
    Similarly, by taking $\epsilon_{min} = (\xi_1 / \a w_1 - 1,\cdots, \xi_L / \a w_L - 1)$, we have $\epsilon_{min} \in \CALD$ and
    \begin{align*}
    	f(\epsilon_{min}) = \sum_{l=1}^L \xi_l \leq (1-s)N.
    \end{align*}
    By the provided condition, we know $\sum_{l=1}^L \xi_l \leq (1-s)N \leq N$.
    Then the continuity of $f$ yields that there exists a point $\tilde{\epsilon}\in\CALD$ such that $f(\tilde{\epsilon}) = (1 - s) N$, which completes our proof.
\end{proof}

Intuitively, Proposition \ref{prop:sol} says that the total target sparsity $s$ of pruning should not violate the constraint of the minimum number of remaining parameters.
Let us assume $\sum_{l=1}^L \xi_l \leq (1-s)N$ to guarantee the feasibility of the optimization problem.
We then can apply vairous convex optimization algorithms to obtain the optimal solution $\epsilon^*$  \cite{boyd2004convex}.

Remark that our proposed method does not require model training or evaluation to find the values of layer sparsities.
We obtain analytically the layer sparsity without additional training or evaluation.
Many existing works \cite{zhong2018where,he2018adc} use a trial and error approach for searching for the best combination of hyperparameters such as the layer sparsity $s_l$. 
Evaluating each combination through additional training or evaluation is necessary which makes these approaches not feasible on large datasets.

\begin{table}
  \caption{Simple DNN model architecture. Conv and FC mean convolutional and fully connected layer, respectively.}
  \label{tbl:keras-model}
  \centering
  \begin{tabular}{clclclcl}
    \hline
    Layer & Filter size & Output size & Activation\\
    \hline
    Conv1       & $(3,3,3,32)$  & $(32,32,32)$ & ReLU        \\
    Conv2       & $(3,3,32,32)$ & $(32,32,32)$ & ReLU        \\
    Maxpool   & $(2,2)$       & $(16,16,32)$ &        \\
    Conv3       & $(3,3,32,64)$ & $(16,16,64)$ & ReLU        \\
    Conv4       & $(3,3,64,64)$ & $(16,16,64)$ & ReLU        \\
    Maxpool   & $(2,2)$       & $(8,8,32)$   &        \\
    FC1         & $(2048,512)$  & $(1,512)$    & ReLU    \\
    FC2         & $(512,10)$    & $(1,10)$     & Softmax   \\
    \hline    
  \end{tabular}
\end{table}

Our method requires convex optimization in a small dimensional space that has negligible computing time in general.
The total calculation time mainly depends on the time spent by calculating the layer capacity $\mu_l$ by (Eq.\ \ref{eqn:capacity}).
Although it is required to compute the norm of each input vectors induced by the whole dataset, 
it has approximately similar computing costs as inferencing.
Our approach has an advantage in terms of computation time.

With our implementation computing (Eq. \ref{eqn:capacity}) took less than 3 hours\commentout{\it \rpl{[*yschoe* for what???]}{[*yschoe* this is still unclear --- *jdkim* solving (\ref{eqn:optimization}) including the computation of the capacities of the whole layers]}} for VGG16 model with the ImageNet dataset in a single GPU machine.
For VGG-16 model with the CIFAR-10 dataset, the total computation time of (Eq. \ref{eqn:capacity}) was less than 1 minute on the same hardware setup.
The most time consuming part of our implementation was calculating the Frobenius norm of each layer.
There could be more speed up if we use an approximate value of norms or use a subset of the data set instead of the whole data set.
However, these are out of the scope of this paper so we will not discuss it any further.

Note that although here we assume that all layers in the model are compressed, selecting and compressing the subset of $1, \cdots, L$ can easily be handled.
Moreover, the process described below can also be applied to any pruning methods such as channel pruning or weight pruning.

\subsection{Selection of Target Sparsity for Channel Pruning}
In the experiments, we applied the proposed layer-wise sparsity method to both weight pruning and channel pruning. 

However, in case of channel pruning, a pruned layer affects the input dimension of the next layer.
So the actual number of pruned parameters may differ from our expectation. 
As a consequence, our proposed method does not achieve the exact total target sparsity $s$  (over-pruned in usual) by channel pruning.
In fact, the exact number of remaining parameters by channel pruning depends on the topology of a neural network and properties of each layer, e.g. the kernel size of a convolution layer.
So an exact formulation of the number of remaining parameters is not mathematically tractable.

To overcome such a limitation we need to consider a method to achieve the total target sparsity. 
Because the value of $s$ in our proposed method can be considered as the compression strength in pruning, we can find the exact sparsity $\hat{s}$ from the $s$.
Let $C(s)$ be the actual number of remaining parameters in a model after applying channel pruning with sparsity $s$, i.e., the $l$-th layer is pruned by the sparsity $s_l$ induced by (\ref{eqn:optimization}).
Then the total target sparsity is achievable if we use the value $\hat{s}$ instead of the total target sparsity $s$.
Finally, the problem becomes finding proper $\hat{s}$ that satisfies,
\begin{align*}
	C(\hat{s}) = (1-s)N.
\end{align*}

Because both solving (\ref{eqn:optimization}) and counting the number of parameters require low computational costs,
the value of $\hat{s}$ can be obtained in reasonable time.
We therefore can control the achieved sparsity of the model by using $\hat{s}$ in case of channel pruning.

\begin{figure}
    \centerline{\includegraphics[width=0.85\columnwidth]{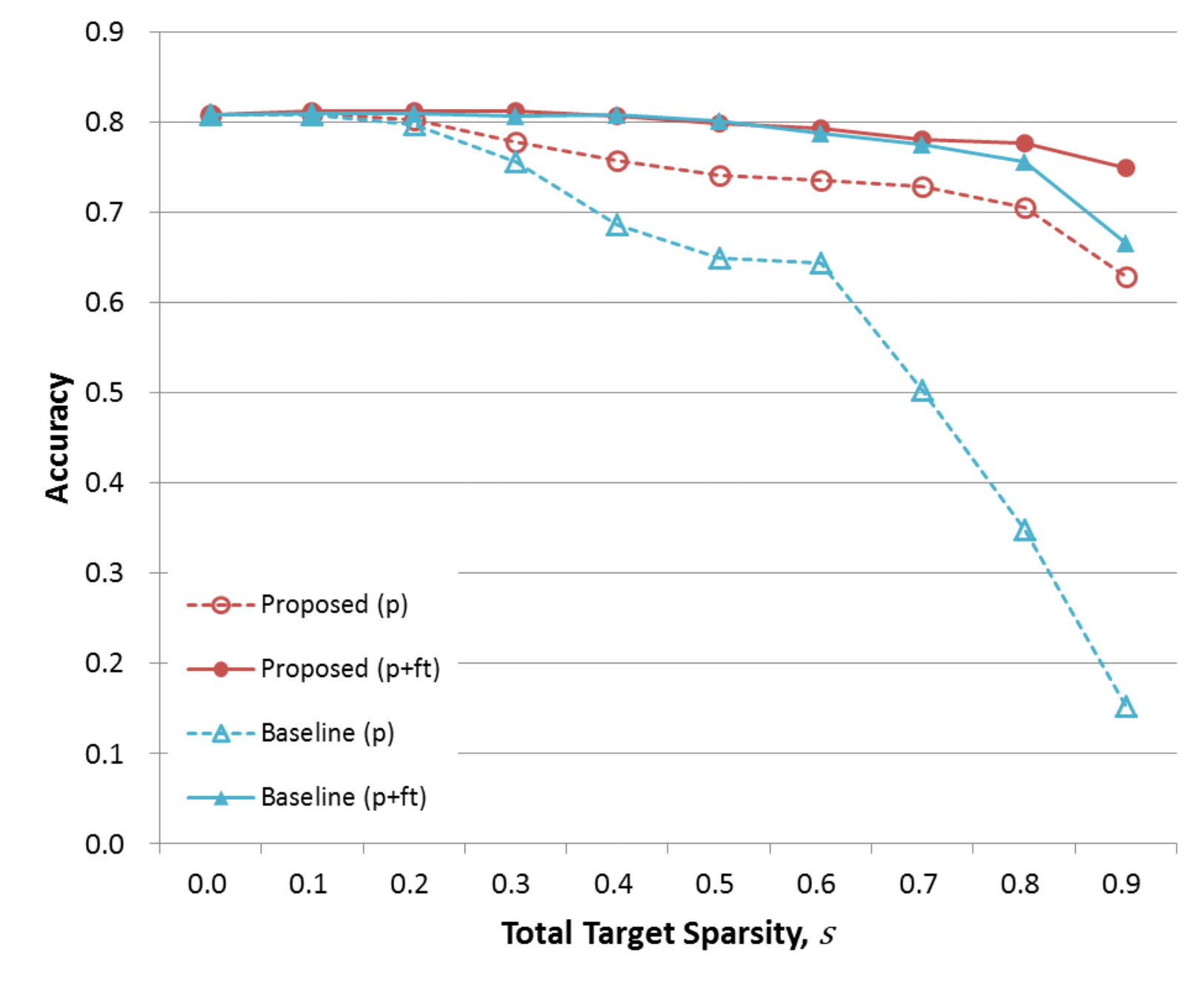}}
    \caption{Classification accuracy comparison against the baseline and the proposed method using a simple DNN model on the CIFAR-10 dataset. For compression, weight pruning \cite{han2015learning} is used. (p) and (p + ft) mean pruning only and fine-tuning after pruning, respectively.}
    \label{fig:cifar10simple_wemag}
\end{figure}

\begin{figure}[t]
    \centerline{\includegraphics[width=0.85\columnwidth]{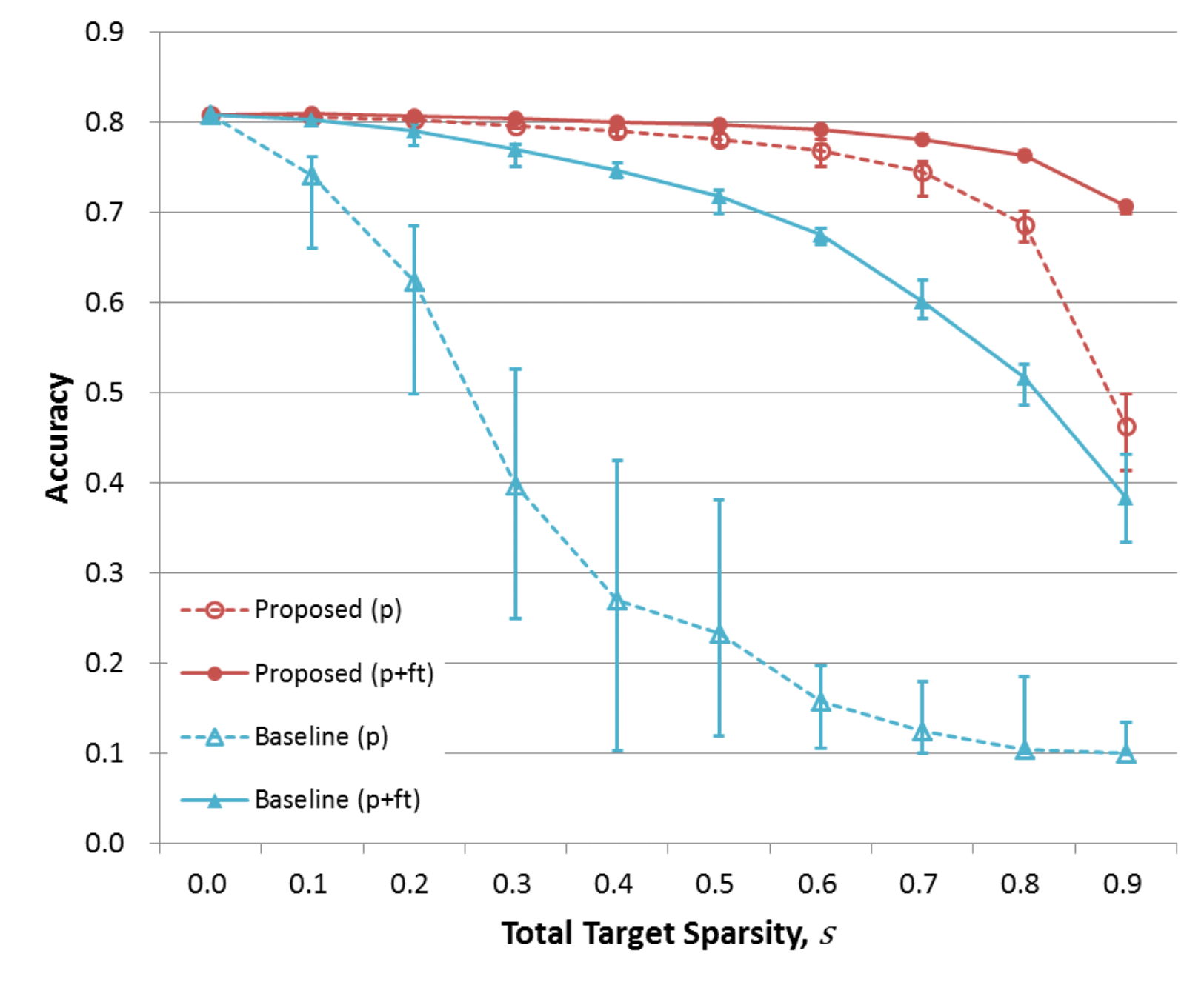}}
    \caption{Classification accuracy comparison against the baseline and the proposed method using simple DNN model on the CIFAR-10 dataset. For compression, random channel pruning is used. We plot the median value for 10 trials and the vertical bar at each point represents the max and min value. (p) and (p + ft) mean pruning only and fine-tuning after pruning, respectively.}
    \label{fig:cifar10simple_chrand}
\end{figure}

\begin{figure}[t]
    \centerline{\includegraphics[width=0.85\columnwidth]{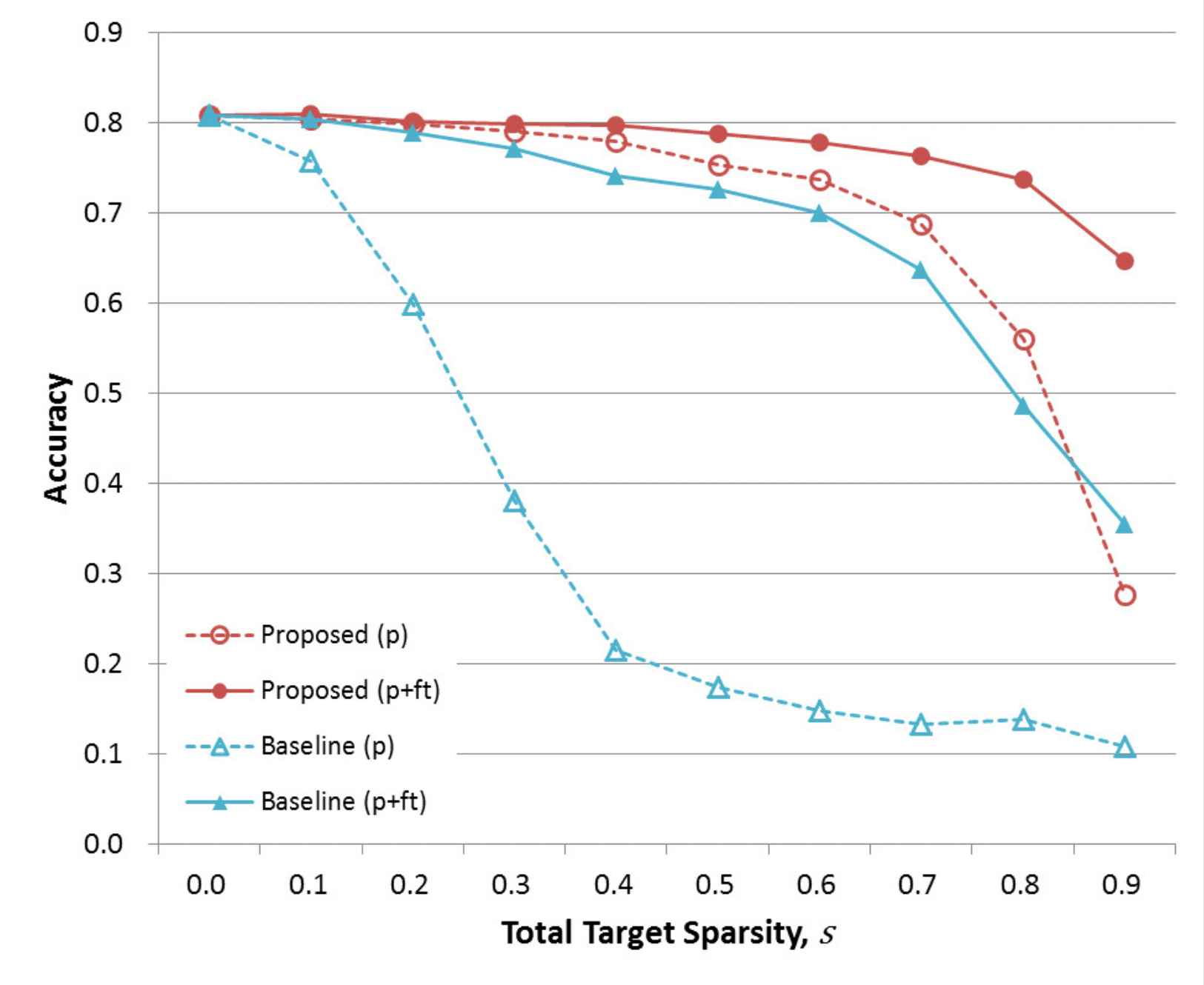}}
    \caption{Classification accuracy comparison against the baseline and the proposed method using simple DNN model on the CIFAR-10 dataset. For compression, channel pruning \cite{li2016pruning} is used. (p) and (p + ft) mean pruning only and fine-tuning after pruning, respectively.}
    \label{fig:cifar10simple_chmag}
\end{figure}

\section{EXPERIMENTAL RESULTS}
In this section, we investigate how much our proposed scheme affects the performance of a pruned model.
To do this, we carried out an image classification task with the simple DNN and VGG-16 \cite{simonyan2014very} model on the CIFAR-10 \cite{krizhevsky2014cifar} and ImageNet \cite{Deng09imagenet:a} dataset.  

In all experiments, we compared our layer-wise sparsity method with the uniform sparsity method which we use as the baseline.
To investigate how much robust our proposed scheme is to different pruning methods, we applied three pruning methods (magnitude-based weight pruning by \npcite{han2015learning}, random channel pruning, and magnitude-based channel pruning by \npcite{li2016pruning}) to DNN architectures. 

We set $\xi_l$ as $3\times w_l\times h_l\times c_{l-1}$ where $c_{l-1}$ is the number of input channels for the $l$-th layer and $w_l$ and $h_l$ is the spatial filter size of the $l$-th layer.
In other words, we wanted to remain at least 3 channels for performance.    

We implemented the proposed method using Keras \cite{chollet2015keras}.
For the VGG-16 model on the ImageNet dataset, we used pre-trained weights in Keras, and for the VGG-16 model on the CIFAR-10 dataset, we used pre-trained weights from \cite{cifar-vgg}.
The simple DNN model was designed and trained by ourselves. 

\subsection{Simple DNN Model on CIFAR-10 Dataset}
Table \ref{tbl:keras-model} shows the architecture of the simple DNN model used in this experiment.  
To compress the model, we applied pruning to Conv2-4 and FC1 layers. 
After pruning is done, we can additionally apply (optional) fine-tuning to improve performance.
Therefore we checked the accuracy of both pruned and fine-tuned models to investigate the resilience of the proposed method. 
For fine-tuning, we ran 3 epochs with learning rate = 0.0001.

Because the computation of layer-wise sparsity is not affected by the choice of pruning method, we need to compute layer-wise sparsity for the model only once.
In the subsections below, therefore, we shared the computed layer-wise sparsity with all three pruning methods.

\subsubsection{Weight Pruning}
To prune weights, we used magnitude-based pruning \cite{han2015learning}. 
In other words, we pruned weights that have small absolute values because we can consider that they do not contribute much to the output.

Figure \ref{fig:cifar10simple_wemag} shows the performance after compression. 
As we can see in the figure, the proposed method outperforms under all total target sparsities in pruning only case, and achieves similar or better results after fine-tuning. 
At large total target sparsity, the effect of the proposed method becomes apparent. 
For example, at total target sparsity 0.9, the accuracy of the proposed method drops by 0.179 after pruning only, while the baseline drops by 0.656. 

\subsubsection{Random Channel Pruning}
To eliminate the effect of the details of a pruning algorithm, we also applied random pruning in channel. 
Given the total target sparsity, we computed the layer-wise sparsities and the number of required parameters to be pruned in order. 
Then the number of pruned channels is determined by applying floor operation to the number of pruned parameters.
According to the computed number of pruned channels, we \textit{randomly} selected which channels are pruned and repeated the selection 10 times.

We compared the classification accuracy of the proposed layer-wise sparsity scheme against the baseline. 
Figure \ref{fig:cifar10simple_chrand} shows the results. 
Similar with weight pruning, the proposed layer-wise sparsity scheme also outperforms the baseline under all total target sparsities. 
As we can see in figure \ref{fig:cifar10simple_chrand}, the proposed method conducts compression reliably compared with the baseline method (smaller height of min-max bar than the baseline).
Interestingly, the result of the proposed method after pruning outperforms the baseline method after fine-tuning.

\begin{figure*}
    \centerline{\includegraphics[width=1.8\columnwidth]{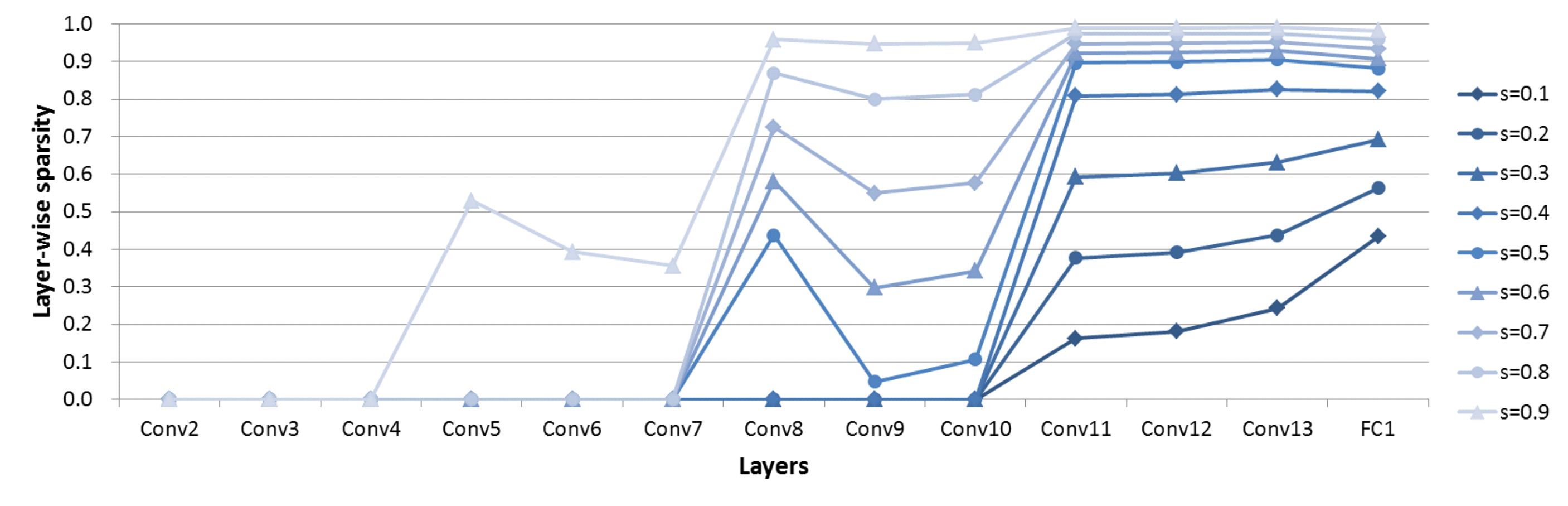}}
    \caption{The computed layer-wise sparsity of VGG-16 model on the CIFAR-10 dataset, given the total target sparsity $s$.}
    \label{fig:cifar10vgg_sparsity}
\end{figure*}

\begin{figure}[t]
    \centerline{\includegraphics[width=0.9\columnwidth]{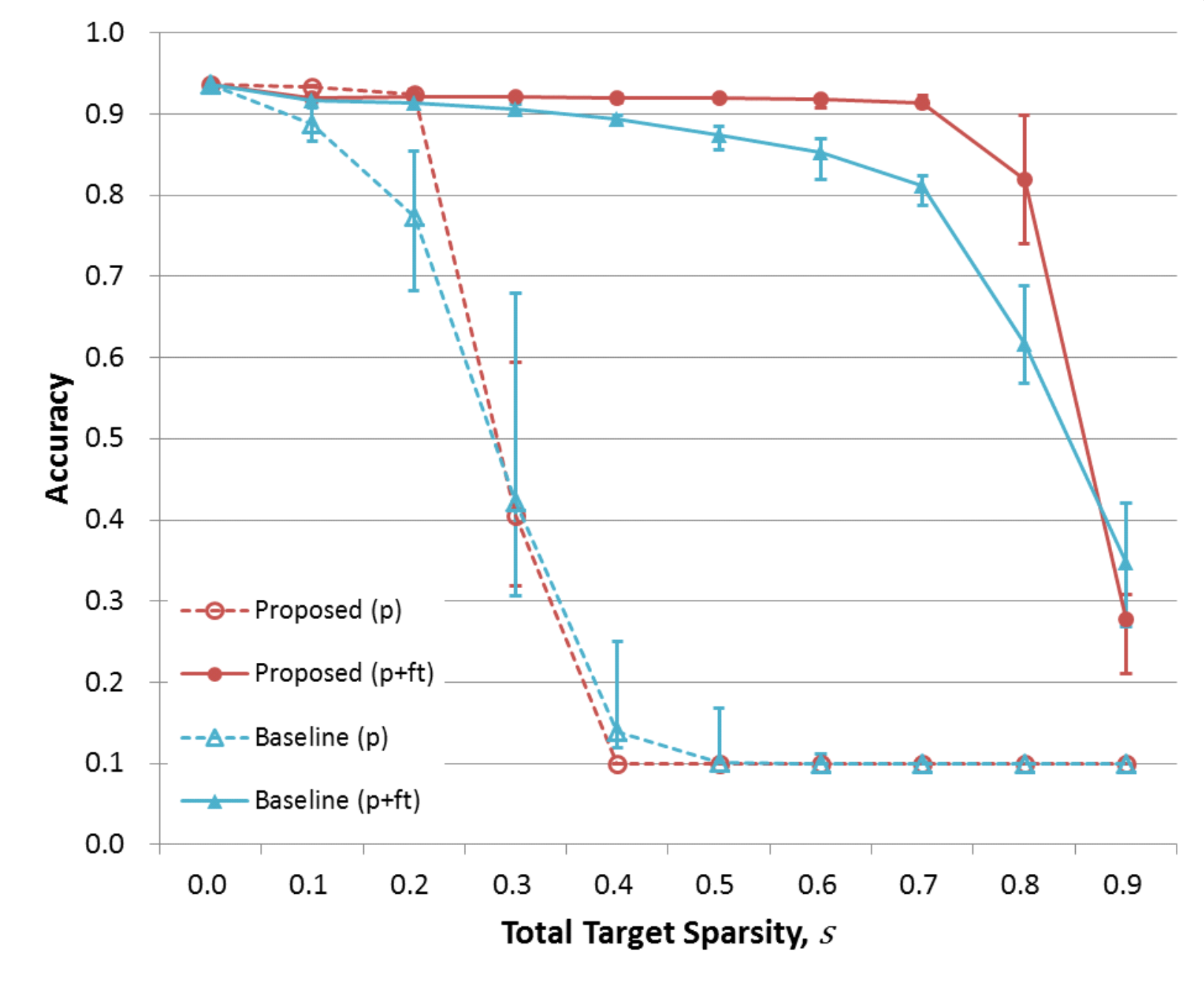}}
    \caption{Classification accuracy comparison against the baseline and the proposed methods using the VGG-16 model on the CIFAR-10 dataset. For compression, random channel pruning is used. We plot the median value for 10 trials and the vertical bar at each point represents the max and min value. (p) and (p + ft) mean pruning only and fine-tuning after pruning, respectively.}
    \label{fig:cifar10vgg_chrand}
\end{figure}

\begin{figure}[t]
    \centerline{\includegraphics[width=0.9\columnwidth]{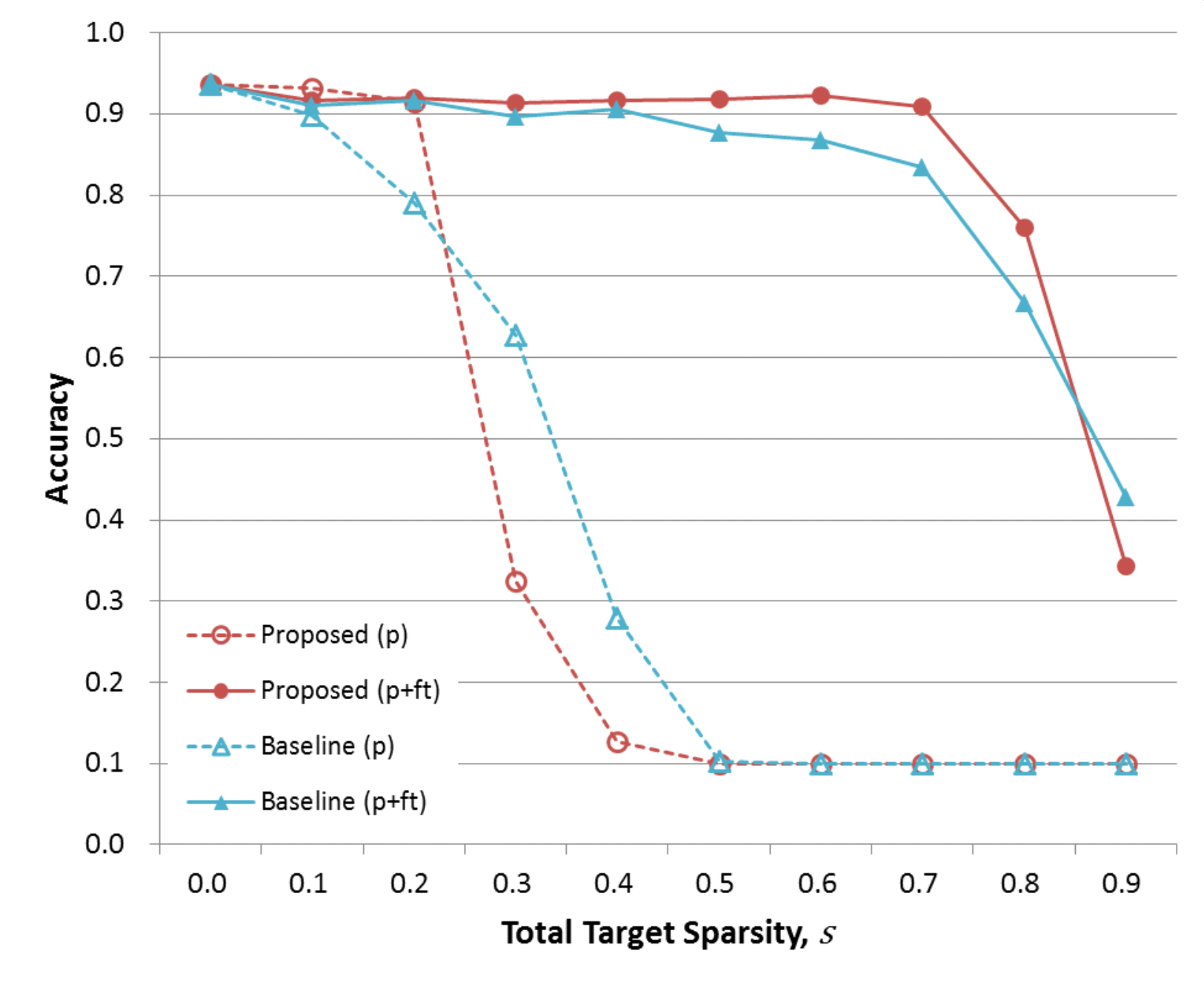}}
    \caption{Classification accuracy comparison with the baseline and the proposed method of VGG-16 model on the CIFAR-10 dataset.  For compression, channel pruning \cite{li2016pruning} is used. (p) and (p + ft) mean pruning only and fine-tuning after pruning, respectively.}
    \label{fig:cifar10vgg_chmag}
\end{figure}

\subsubsection{Channel Pruning}
We used channel pruning \cite{li2016pruning} to compress the model. 
The authors used the sum of absolute weights in a channel as the criteria for pruning. 
In other words, channels that have small magnitude weights are pruned. 
Figure \ref{fig:cifar10simple_chmag} shows the results. 

Numerically, the proposed method achieved up to 58.9\% better classification accuracy than the baseline using the same total target sparsity.
Please refer to the accuracy of the Baseline (p) and Proposed (p) at total target sparsity 0.6 in figure \ref{fig:cifar10simple_chmag}.
In terms of compression ratio, the proposed method can prune up to 5 times more parameters than the baseline while retaining the same accuracy.
Please compare the accuracy of the Baseline (p) at total target sparsity 0.1 and Proposed (p) at total target sparsity 0.5 in figure \ref{fig:cifar10simple_chmag}.

\subsection{VGG-16 on CIFAR-10 Dataset}
In this experiment, we applied pruning to all Conv and FC layers except the first and the last layers (Conv1 and FC2) in the VGG-16 model.
For fine-tuning, we ran 3 epochs with learning rate = 0.0001 in all pruning cases.

Figure \ref{fig:cifar10vgg_sparsity} shows the computed layer-wise sparsities under different total target sparsities, $s$. 
The computed sparsities confirm our assumption that all layers have different importance for the given task.



\subsubsection{Random Channel Pruning}
Similar to the simple DNN model, we repeated the random channel pruning 10 times and the result of pruning is shown in figure \ref{fig:cifar10vgg_chrand}. 
Surprisingly, the proposed method achieves almost 90\% maximum accuracy when total target sparsity is 0.8.
Although the VGG-16 model is considered already quite overparameterized, we can still say that the proposed method efficiently compresses DNN models.
The proposed method conduct compression reliably compared with the baseline method (smaller height of min-max bar than the baseline) as we can see in figure \ref{fig:cifar10vgg_chrand}.
Though the performance after pruning is the same with the random guessing or worse than the baseline when $s > 0.2$, the performance is almost recovered (except the case $s=0.9$) after fine-tuning.
This demonstrates that considering layer-wise sparsity helps not only pruning but also performance improvement with fine-tuning.

\begin{table*}[t]
 \caption{Performance evaluation for the baseline, other state-of-the-art methods and the proposed method using a VGG-16 model on the CIFAR-10 dataset. 
 We pruned Conv layers only for fair comparison.
 ratio(param) and ratio(FLOP) mean pruned ratio in number of parameters and FLOPs, respectively.
 To compute $\Delta_{err}$, ratio(param) and ratio(FLOP) of other methods, we referred the test error, \# params and \# FLOPs written in their papers.}
  \label{tbl:cifarvgg_results}
  \setlength{\tabcolsep}{9.5pt}
  \centering
  \begin{tabular}{c|c|c|c|c|c|c}
     \hline
     Method  & Error (\%)  & $\Delta_{err}$ & \# params. & ratio(param) & \# FLOPs & ratio(FLOP) \\ \hline
     Original & 6.41 & & 15.0M & & ${3.13 \times 10^8}$ & \\ \hline
     Baseline & 7.42 & +1.01 & 5.4M & 64.0\% & ${1.13 \times 10^8}$ & 64.0\% \\
     \cite{li2016pruning} & 6.60 & -0.15 & 5.4M & 64.0\% & ${2.06 \times 10^8}$ & 34.2\%  \\
     \textbf{Proposed} & \textbf{6.25} & \textbf{-0.16} & 5.4M & 64.0\% & ${2.46 \times 10^8}$ & 21.6\% \\ \hline
     \cite{ayinde2018building}-A & \textbf{6.33} & +0.13 & 3.23M & 78.1\% & ${1.86 \times 10^8}$ & 40.5\%  \\
     \cite{ayinde2018building}-B & 6.70 & +0.50 & 3.23M & 78.1\% & ${1.86 \times 10^8}$ & 40.5\%  \\
     \textbf{Proposed} & 6.53 & \textbf{+0.12} & 3.23M & 78.5\% & ${2.13 \times 10^8}$ & 32.1\% \\ 
    \hline
  \end{tabular}
\end{table*}

\subsubsection{Channel Pruning}
Figure \ref{fig:cifar10vgg_chmag} shows the results. 
Though the accuracy values after pruning only are worse than the baseline, the degree of performance improvement after fine-tuning is better than the baseline when $s > 0.2$ except $s=0.9$. 

Similar to the results of random channel pruning, the proposed method maintains the performance within 3\% of the original model until the total target sparsity becomes 0.7.

Table \ref{tbl:cifarvgg_results} shows the performance comparison with other channel pruning methods. 
For fair comparison, we pruned Conv layers only.
For fine-tuning, we ran 100 epochs with a constant learning rate 0.001 and select the best accuracy. 
As we can see in the Table, the proposed method won first place when the compressed model has 5.4M parameters and is in second place when the compressed model has 3.23M parameters. 
However, in performance drop ($\Delta_{err}$), the proposed method outperforms other methods in both cases. 
Our proposed method prunes more filters from the later layers than those from the former layers.
It performs better in reducing the number of parameters, while it does not in reducing FLOPs as we can see in Table \ref{tbl:cifarvgg_results}.
However, reducing FLOPs can be easily achievable by reformulating (Eq. \ref{eqn:sparsity_condi}).



\begin{figure*}[t]
    \centerline{\includegraphics[width=1.8\columnwidth]{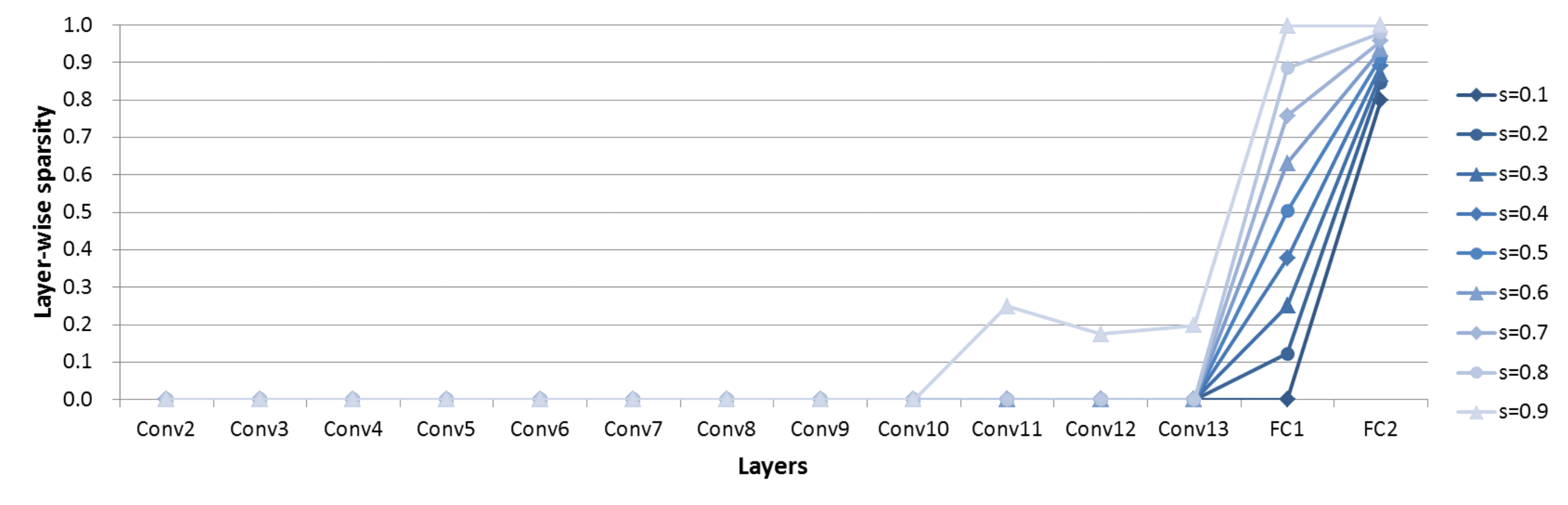}}
    \caption{The computed layer-wise sparsity of VGG-16 model on the ImageNet dataset, given the total target sparsity $s$.}
    \label{fig:imagenetvgg_sparsity}
\end{figure*}

\subsection{VGG-16 on ImageNet Dataset}
In this experiment, we applied pruning to all Conv and FC layers except the first and the last layers (Conv1 and FC3) in the VGG-16 model on the ImageNet dataset.

\begin{figure}[t]
    \centerline{\includegraphics[width=\columnwidth]{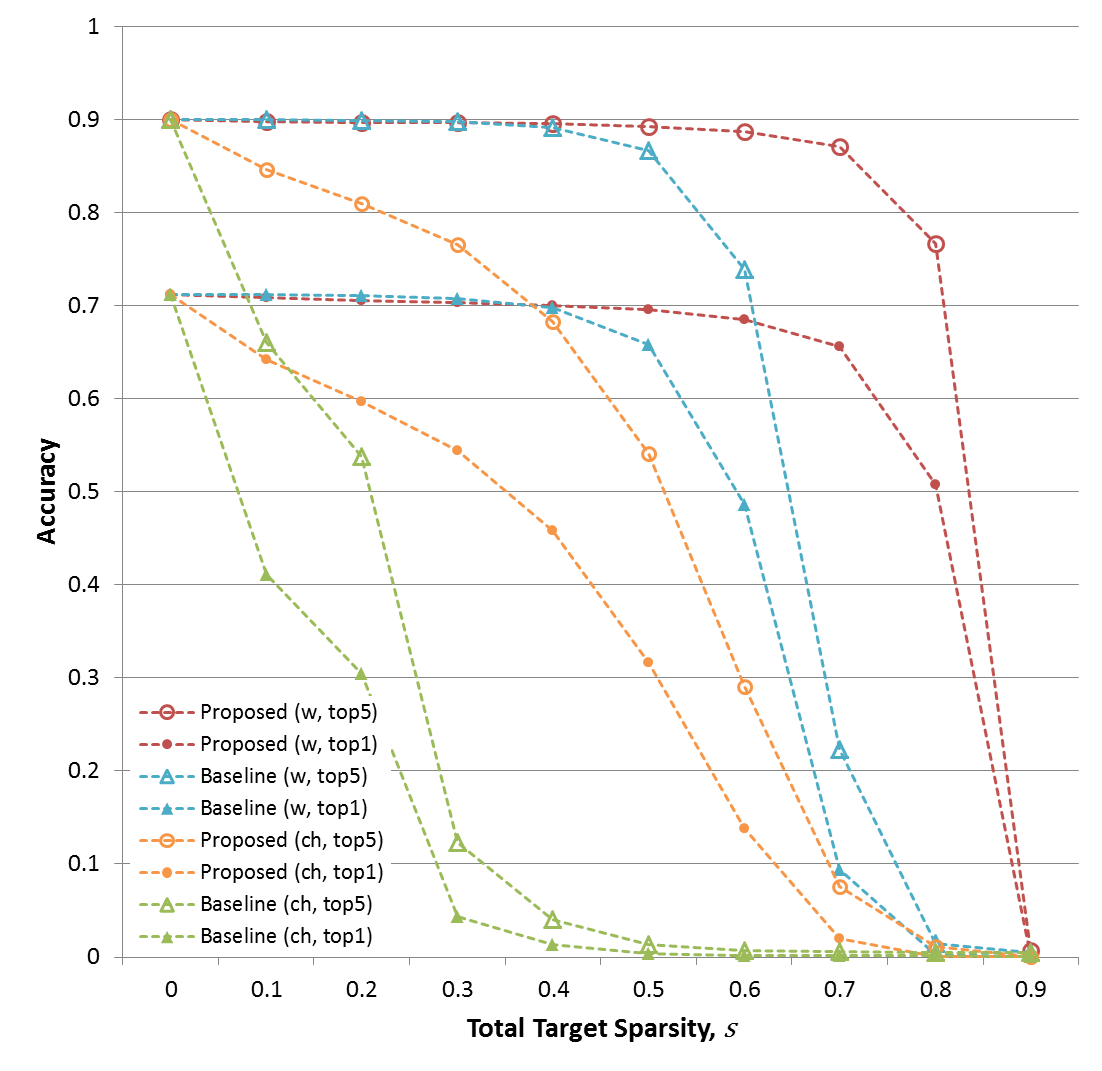}}
    \caption{Classification accuracy comparison against the baseline and the proposed method using a VGG-16 model on the ImageNet dataset.
    'w' means weight pruning and 'ch' means channel pruning.}
    \label{fig:ImageNet_vgg}
\end{figure}


Figure \ref{fig:imagenetvgg_sparsity} shows the computed layer-wise sparsities under different total target sparsities. 
As we can see in the figure, the difference of layer-wise sparsity between layers. 
Surprisingly, the figure says that we only need prune two fully connected layers (FC1 and FC2) until the total target sparsity $s$ becomes 0.8.
Such results are reasonable because more than 85\% of total parameters in VGG are concentrated in the FC1 and FC2 layers.
Therefore we can consider that there would be many redundant parameters in those layers.
However, we can also see that the proposed method computes layer-wise sparsities not considering the number of parameters only, from the figure.
For example, the number of parameters in FC1 layer is far lager than the FC2 layer but the sparsity of FC2 layer is larger than the FC1 layer. 
Conv11, Conv12, and Conv13 also represents similar results (all three layers have the same number of parameters).  

\subsubsection{Weight Pruning}
Figure \ref{fig:ImageNet_vgg} shows the performance after compression. 
The proposed method outperforms the baseline under all target sparsities both top-1 and top-5 accuracy. 
Both methods maintain the performance until $s=0.4$. 
But when $s$ becomes larger than 0.4, the proposed method shows consistently better performance.


\subsubsection{Channel Pruning}
Figure \ref{fig:ImageNet_vgg} shows the compression results using channel pruning  \cite{li2016pruning}. 
Because the channel pruning removes bunch of parameters, distributing the number of reqired parameters to be pruned to all layers according to the layer-wise sparsity is harder than weight pruning.
Therefore the performance is worse than the weight pruning but still outperforms the baseline method in both top-1 and top-5 accuracy cases.
 

From the above results, the proposed layer-wise sparsity scheme outperforms the baseline method except for few cases.
We can validate our claim that not all layers have the same importance for the given task 
and the proposed layer-wise sparsity scheme is highly effective in various DNN models for compression by pruning. 

\section{CONCLUSION}
In this paper, we proposed a new method that automatically computes the layer-wise sparsity from the layer-wise capacity for DNN model compression, especially pruning.
Our proposed method does not require additional training or evaluation steps to compute the layer-wise sparsity, which has an advantage in terms of computation time.
Experimental results validated the efficiency of the proposed layer-wise sparsity calculation in DNN model compression. Furthermore, the estimated layer-wise sparsity varied greatly across
layers, suggesting that the information can be used to find where the most compact representation resides in the deep neural network.



\bibliography{main}
\bibliographystyle{aaai}

\bigskip
\end{document}